\newcommand{\mycomment
}[1]{}
\titlespacing{\section}{0pt}{0.25ex}{0.25ex}
\titlespacing{\subsection}{0pt}{0.2ex}{0.2ex}
\titlespacing{\subsubsection}{0pt}{0.1ex}{0.1ex}
\DeclareMathAlphabet{\mathcal}{OMS}{cmsy}{m}{n}
\newcommand{\regtext}[1]{\mathrm{\textnormal{#1}}}
\newcommand{\R}{{\mathbb{R}}}
\newcommand{\N}{{\mathbb{N}}}
\DeclarePairedDelimiter{\norm}{\lVert}{\rVert}
\DeclareMathOperator*{\argmin}{arg\,min}
\newcommand{\opt}{^\star}
\newcommand{\proj}{\regtext{proj}}
\newcommand{\vc}[1]{{\mathbf{#1}}}
\newcommand{\arridx}[2]{{\left(#1\right)}_{#2}}
\newcommand{\lbl}[1]{_{\regtext{#1}}}
\newcommand{\goal}{\lbl{goal}}
\newcommand{\action}{\vc{u}}
\newcommand{\RLaction}{\action}
\newtheorem{definition}{Definition}
 \newtheorem{theorem}{Theorem}
  \renewcommand{\footnotesize}{\fontsize{8pt}{11pt}\selectfont}
\DeclareMathAlphabet{\mathcal}{OMS}{cmsy}{m}{n}
\newcommand{\diag}{\text{diag}}
\newcommand{\setdef}[2][]{
	\left\{
	\ifblank{#1}{}{#1 \hspace{.1cm} \middle| \hspace{.1cm}}
	#2
	\right\}
}
\newcommand{\vast}{\bBigg@{4}}
\newcommand{\Vast}{\bBigg@{5}}
\DeclareRobustCommand{\nand}{\mathbin{\mathpalette\n@and@or\land}}
\DeclareRobustCommand{\nor}{\mathbin{\mathpalette\n@and@or\lor}}
\DeclareRobustCommand{\enand}{\overline{\mathbin{\mathpalette\n@and@or\land}}}
\DeclareRobustCommand{\enor}{\overline{\mathbin{\mathpalette\n@and@or\lor}}}
\newcommand{\n@and@or}[2]{
  \vphantom{#2}
  \ooalign{$\m@th#1#2$\cr\hidewidth$\m@th#1\sim$\hidewidth\cr}
}
\begin{document}
\title{\LARGE \bf
Safe LLM-Controlled Robots with Formal Guarantees \\ via Reachability Analysis}
\author{
Ahmad Hafez$^{*1}$, Alireza Naderi Akhormeh$^{*1}$, Amr Hegazy$^{2}$, and Amr Alanwar$^{1}$
\vspace{-5cm}
\thanks{$^{*}$ Authors are with equal contributions.}
\thanks{$^{1}$ Authors are with the Technical University of Munich; TUM School of Computation, Information and Technology, Department of Computer Engineering. {\tt\small \{a.hafez, alireza.naderi, alanwar\}@tum.de}.}
\thanks{$^{2}$ Author is with the German university in Cairo; Faculty of Media Engineering and Technology, Department of Computer Science and Engineering. {\tt\small {amr.hazem}@student.guc.edu.eg}.}}

\maketitle

\begin{abstract}

The deployment of Large Language Models (LLMs) in robotic systems presents unique safety challenges, particularly in unpredictable environments. Although LLMs, leveraging zero-shot learning, enhance human-robot interaction and decision-making capabilities, their inherent probabilistic nature and lack of formal guarantees raise significant concerns for safety-critical applications. Traditional model-based verification approaches often rely on precise system models, which are difficult to obtain for real-world robotic systems and may not be fully trusted due to modeling inaccuracies, unmodeled dynamics, or environmental uncertainties. To address these challenges, this paper introduces a safety assurance framework for LLM-controlled robots based on data-driven reachability analysis, a formal verification technique that ensures all possible system trajectories remain within safe operational limits. Our framework specifically investigates the problem of instructing an LLM to navigate the robot to a specified goal and assesses its ability to generate low-level control actions that successfully guide the robot safely toward that goal. By leveraging historical data to construct reachable sets of states for the robot-LLM system, our approach provides rigorous safety guarantees against unsafe behaviors without relying on explicit analytical models. We validate the framework through experimental case studies in autonomous navigation and task planning, demonstrating its effectiveness in mitigating risks associated with LLM-generated commands. This work advances the integration of formal methods into LLM-based robotics, offering a principled and practical approach to ensuring safety in next-generation autonomous systems.

\end{abstract}
\begin{keywords}
Large language models, zero-shot learning, reachability analysis, and safety.
\end{keywords}

\section{INTRODUCTION}

Integrating the Large Language Models (LLMs) into robotics has enabled robots to interpret natural language commands, adapt to unstructured environments, and collaborate with humans in transformative ways~\cite{wu2024safety}. Applications range from assistive healthcare robots to autonomous delivery systems, where LLMs act as high-level controllers to translate human intent into robotic actions. However, as these systems transition to real-world deployment, their safety and reliability remain critical challenges. Unlike traditional rule-based controllers, LLMs generate outputs through probabilistic reasoning, introducing uncertainties that are difficult to model or verify. Recent studies highlight that even minor adversarial modifications to input prompts or perceptual data can degrade system performance by 19–29\%, underscoring the risks of deploying LLMs in safety-critical scenarios~\cite{wu2024safety}.

Existing safety assurance methods, such as runtime monitoring or constraint-based control, often assume deterministic decision-making models and struggle to address the open-ended behavior of LLMs~\cite{yang2024plug}. For example, LLM-controlled robots may misinterpret ambiguous instructions (e.g., "avoid obstacles ahead") or fail to recognize different hazards not covered in training data~\cite{robey2024jailbreaking,wu2024safety}. This gap is exacerbated by the lack of formal verification frameworks tailored to systems where LLM-generated decisions interact with robotic dynamics.

\subsection{Related Work}
Integrating LLMs into robotics has opened new possibilities for adaptive and intelligent robotic systems. However, this integration also introduces significant challenges, particularly in ensuring safety and reliability. This section reviews recent advancements in LLM-driven robotic control, identifies safety challenges unique to LLM-controlled systems, and examines existing approaches to safety assurance using formal methods. We also discuss the role of reachability analysis in robotics and highlight the gaps in current research, setting the stage for our proposed framework.

\subsubsection{LLM-Driven Robotic Control}
LLMs have emerged as powerful tools for robotic task planning and control. Recent works demonstrate their ability to generate low-level commands for dynamic locomotion~\cite{wang2023prompt}, decompose long-horizon tasks into multi-step plans~\cite{Ouyang2024}, and bridge high-level reasoning with low-level policies using latent codes~\cite{Shentu2024}. Hierarchical frameworks further optimize computational efficiency by decoupling high-frequency control from low-frequency semantic reasoning~\cite{zhang2024hirt}. However, these approaches prioritize flexibility and adaptability over formal safety guarantees, relying instead on post-hoc validation or empirical testing.

\subsubsection{Safety Challenges in LLM-Controlled Systems}
The integration of LLMs into robotics introduces safety risks. Studies show that vulnerabilities in grounding language instructions to physical actions can lead to issues in robot behavior under adversarial inputs~\cite{Myers2023}. The probabilistic nature of LLMs exacerbates these risks, as their outputs may unpredictably violate safety constraints in different environments~\cite{Bajcsy2024}. For instance, LLM-generated trajectories for humanoids can fail to account for temporal consistency, leading to unstable motions~\cite{Radosavovic2024}.

\subsubsection{Safety Assurance via Formal Methods}
Some efforts to ensure safety in LLM-driven systems employ formal verification techniques. Other approaches leverage LLMs to diagnose and repair unsafe motion planners, though their focus remains on traditional planning algorithms rather than LLM-generated policies~\cite{Lin2024}. Unlike Büchi automata, hybrid verification frameworks address sequence-aware safety but lack support for the open-ended decision-making of LLMs~\cite{wang2024ensuring}.

\subsubsection{Reachability Analysis in Robotics}
Reachability analysis has been widely adopted for safety-critical systems, enabling exhaustive verification of system trajectories. Recent work integrates reachability-based safety controllers with LLM-generated task plans~\cite{Thumm2024}. However, these approaches often assume a known system model, which may not be accurate in practice and could lead to safety violations, while other studies synthesize interpretable policies using LLM-guided search. However, existing methods either assume deterministic decision-making or focus on non-LLM systems~\cite{Lin2024,selim2022safe,MahmoudZPC}.

\subsubsection{Gaps and Novelty}

While LLMs enable unprecedented adaptability in robotics~\cite{Gupta2024,Javaid2024}, their integration with formal safety frameworks remains underexplored. Some approaches prioritize LLM flexibility without rigorous guarantees~\cite{wang2023prompt, Ouyang2024}.
Notably, many formal methods assume a precise robot model, which may not accurately reflect real-world dynamics and could lead to safety violations. In contrast, our work bridges this gap by proposing a reachability-based framework that treats LLM-controlled robots as dynamical systems, enabling formal verification of safety properties without relying on potentially inaccurate models. Our approach is inspired by~\cite{selim2022safe, chung2021constrained}, which we extended to preserve the adaptability of language models while ensuring robust safety guarantees.

This work's contributions are multifold:

\begin{itemize}
\item Unified safety framework with zero-shot learning for LLM-controlled systems:
We propose a novel framework that lets probabilistic language models control nonlinear dynamical systems and enable rigorous safety verification. The framework uniquely incorporates zero-shot learning, allowing it to generalize to unseen tasks and environments without task-specific training, significantly enhancing its versatility and applicability.

 \item Utilizing data-driven reachability analysis: 
 Our framework utilizes reachability analysis that eliminates the reliance on precise analytical models, which are often impractical or inaccurate, by leveraging historical data to construct robust reachable sets. This model-free approach provides robust safety guarantees in complex and uncertain environments, building on and extending prior work in data-driven reachability analysis~\cite{alanwar2023data}. The method is computationally efficient and scalable, making it suitable for real-world deployment in different settings.

 \item Performance evaluation: 
 Through simulations and real experiments, we validate the framework’s effectiveness in ensuring safety for LLM-controlled robots operating in different and unpredictable scenarios. The experiments evaluate key metrics such as scalability, adaptability, and real-time performance, demonstrating the framework’s practical utility. The results highlight the system’s ability to maintain robust safety guarantees across diverse and evolving conditions, showcasing its readiness for real-world applications.
\end{itemize}
Readers can watch the videos of the proposed approach on our YouTube playlist\footnote{\href{https://www.youtube.com/playlist?list=PLzH0T78uTTsuyMuDZ6bKfJPafIcgKau76}{\mbox{\textcolor{blue}{{http://tiny.cc/SafeLLMRA-Videos}}}}}, and reproduce our results by utilizing our openly accessible repository\footnote{\href{https://github.com/TUM-CPS-HN/SafeLLMRA}{\mbox{\textcolor{blue}{{https://github.com/TUM-CPS-HN/SafeLLMRA}}}}}.

The remainder of this paper is organized as follows: In Section~\ref{sec:prelim}, the preliminaries and problem statement are introduced. Section~\ref{sec:main} presents the proposed approach, while Section~\ref{sec:eval} presents experimental results. Lastly, Section~\ref{sec:con} concludes this paper with final remarks.

\section{Preliminaries and Problem Statement}\label{sec:prelim}

This section presents the notation, preliminary definitions, and the problem statement.

\subsection{Notation}
The set of $n$-dimensional real numbers is denoted by $\mathbb{R}^n$, the natural numbers by $\mathbb{N}$, and the set of integers from $n$ to $m$ by $n{:}m$. 
For a matrix ${A}$, the element at row $i$ and column $j$ is denoted by $({A})_{i,j}$, the $j$-th column by $({A})_{:\,,j}$. The $i$-th element of a vector ${a}$ by $\arridx{{a}}{i}$. 
A matrix or vector of ones with a proper dimension is represented as ${1}$.
We denote the Kronecker product by $\otimes$. 
The $\diag$ operator constructs a block-diagonal matrix by placing its arguments along the diagonal in a matrix of zeros. 
For sets $\mathcal{A}$ and $\mathcal{B}$, the Minkowski sum is defined as $\mathcal{A} + \mathcal{B} = \{ {a} + {b} \mid {a} \in \mathcal{A}, {b} \in \mathcal{B} \}$, and the Cartesian product as $\mathcal{A} \times \mathcal{B} = \left\{ \begin{bmatrix} a \\ b \end{bmatrix} \mid a \in \mathcal{A}, b \in \mathcal{B} \right\}$. Sets are represented using calligraphic font, e.g., $\mathcal{R}$. Infinity norm of $A$ is denoted by $\norm{A}_\infty$.

\subsection{Set Representations}

To represent sets, zonotopes and constrained zonotopes \cite{scott2016constrained} are employed, as they enable efficient computation of the Minkowski sum, a key operation in reachability analysis~\cite{althoff2010reachability}. They are introduced next.

\begin{definition}(\textbf{Zonotope} \cite{conf:zono1998}) \label{def:zonotopes} 
Given a center $c_{\mathcal{Z}} \in \mathbb{R}^{n}$ and $\gamma_{\mathcal{Z}} \in \mathbb{N}$ generator vectors in a generator matrix $G_{\mathcal{Z}}=\begin{bmatrix} g_{\mathcal{Z}}^{(1)} \dots g_{\mathcal{Z}}^{(\gamma_{\mathcal{Z}})}\end{bmatrix} \in \mathbb{R}^{n \times \gamma_{\mathcal{Z}}}$, a zonotope is defined as
\begin{equation}
	\mathcal{Z} = \Big\{ x \in \mathbb{R}^{n} \; \Big| \; x = c_{\mathcal{Z}} + G_{\mathcal{Z}} \beta \, ,
	\norm{\beta}_\infty\leq 1 \Big\} \; .
\end{equation}
We use the shorthand notation $\mathcal{Z} = \langle{c_{\mathcal{Z}},G_{\mathcal{Z}}}\rangle$ for a zonotope. 
\end{definition}

The \emph{Minkowski sum} of two zonotopes, $\mathcal{Z}_1 = \langle{{c}_1,{G}_1}\rangle$ and $\mathcal{Z}_2 = \langle{{c}_2,{G}_2}\rangle$, is computed as 
\[
\mathcal{Z}_1 + \mathcal{Z}_2 = \langle{{c}_1+{c}_2,[{G}_1,{G}_2]}\rangle
\] 
as established in \cite{althoff2010reachability}. Zonotopes have been generalized to represent any convex polytope by imposing constraints on the $\beta$ factors~\cite{scott2016constrained}. Compared to polyhedral sets, constrained zonotopes offer a key benefit: they retain the superior scalability of zonotopes as state-space dimensions grow, owing to their reliance on a generator-based set representation \cite{conf:cora}.

\begin{definition}\label{df:contzono}
(\textbf{Constrained Zonotope} \cite{scott2016constrained}) An $n$-dimensional constrained zonotope is defined by
\begin{equation}\label{eq:conszono}
 \mathcal{C} = \setdef[x\in\mathbb{R}^{n}]{x=c_{\mathcal{C}}+G_{\mathcal{C}} \beta, \ A_{\mathcal{C}} \beta=b_{\mathcal{C}}, \, \norm{\beta}_\infty\leq 1}, 
\end{equation}
where $c_{\mathcal{C}} \in \R^{n}$ is the center, $G_{\mathcal{C}}=\begin{bmatrix} g_{\mathcal{C}}^{(1)} \dots g_{\mathcal{C}}^{(\gamma_{\mathcal{C}})}\end{bmatrix}$ $\in$ $\R^{n \times n_g}$ is the generator matrix and $A_{\mathcal{C}} \in $ $\R^{n_c \times n_g}$ and $b_{\mathcal{C}} \in \R^{n_c}$ denote the constraints. In short, we use the shorthand notation $\mathcal{C}= \langle{c_{\mathcal{C}},G_{\mathcal{C}},A_{\mathcal{C}},b_{\mathcal{C}}}\rangle$ for a constrained zonotope.
\end{definition}

For an $n$-dimensional interval with lower and upper bounds $\underline{l} \in \R^n$ and $\overline{l} \in \R^n$, respectively, we use a notation to represent it as a zonotope $\mathcal{Z} = \langle{\underline{l},\overline{l}}\rangle \subset \R^n$, where the center is given by $\tfrac{1}{2}(\underline{l}+\overline{l})$ and the generator matrix is $\diag{\tfrac{1}{2}(\overline{l} - \underline{l})}$.

\subsection{System Dynamics and Safety Assumptions}
The robot is modeled as a discrete-time, nonlinear control system with an unknown model, where the state at time $k \in \mathbb{N}$ is given by ${x_k} \in \mathcal{X} \subset \mathbb{R}^n$. The state space $\mathcal{X}$ is assumed to be compact. At each time step $k$, the input ${u_k}$ is selected from a zonotope $\mathcal{U} \subset \mathbb{R}^m$ which represents the set of all possible actions. Process noise is denoted by ${w_k} \in \mathcal{W} \subset \mathbb{R}^n$. The system dynamics, represented by the black-box function $f: \mathcal{X} \times \mathcal{U} \times \mathcal{W} \to \mathcal{X}$, are described by:
\begin{align}\label{eq:sys}
 x_{k+1} &= f(x_k,u_k) + w_k.
\end{align}
We further assume that $f$ is twice differentiable and Lipschitz continuous, implying the existence of a \emph{Lipschitz constant} $L^\star$ such that, for all ${z}_1, {z}_2 \in \mathbb{R}^{n+m}$ with ${z}_j = \begin{bmatrix} x_j^T & u_j^T
\end{bmatrix}^T$, the following holds:
$
\norm{f({z}_1) - f({z}_2)} \leq L^\star \norm{{z}_1 - {z}_2}.
$
The initial state of the system, ${x}_{0}$, is drawn from a compact set $\mathcal{X}_{0} \subset \mathbb{R}^n$. 

To ensure safety guarantees, we also incorporate the concept of failsafe maneuvers from mobile robotics \cite{kousik2020bridging,magdici2016fail}.
The dynamics $f$ are invariant to positional translation, and the robot can brake to a complete stop within $n_{brake} \in \mathbb{N}$ time steps, remaining stationary indefinitely. Unsafe regions of the state space, referred to as obstacles, are denoted by $\mathcal{X}_{obs} \subset \mathcal{X}$. We assume obstacles are static but vary between episodes, as this work focuses on single-agent navigation rather than predicting the motion of other agents. Reachability-based frameworks for handling dynamic environments and other agents' motion \cite{leung2020infusing,vaskov2019not} can extend the applicability of this work. Finally, we assume the robot can instantaneously sense all obstacles $\mathcal{X}_{obs}$ and represent them as a union of constrained zonotopes. In cases with sensing limitations, a minimum detection distance can be computed to ensure safety, based on the robot's maximum speed and braking distance \cite{kousik2020bridging}.



\subsection{Safety via Reachable Set Computation}
We ensure safety by computing the forward reachable set of our robot for a given motion plan and then adjusting the plan to ensure that the forward reachable set does not intersect with any obstacles. We define the reachable set as follows:

\begin{definition}
The reachable set $\mathcal{R}_{k}$ at time step $k$, subject to a sequence of inputs ${u}_{j} \in \mathcal{U} \subseteq \mathbb{R}^m$, noise ${w}_{j} \in \mathcal{W}$ for all $j \in \{0, \dots, k-1\}$, and initial set $\mathcal{X}_{0} \subseteq \mathbb{R}^n$, is defined as:
\begin{align}\begin{split}\label{eq:reachable_set_k}
 \mathcal{R}_{k} = \big\{&{x_k} \in \mathbb{R}^n \, \big|\ 
 x_{j+1} = f(x_j, u_j) + w_j,\ {x}_{0} \in \mathcal{X}_{0},\\ 
 &{u}_{j} \in \mathcal{U}_{j},\ \text{and}\ w_{j} \in \mathcal{W},\ \forall\ j = 0,\dots,k-1\big\}.
\end{split}\end{align}\label{def:R}
\end{definition}

Note that we treat the dynamics $f$ as a black box, which may be nonlinear and challenging to model. However, we aim to overapproximate the reachable set $\mathcal{R}_k$.

\begin{figure}[t]
 \centering
 \includegraphics[width=0.9\columnwidth]{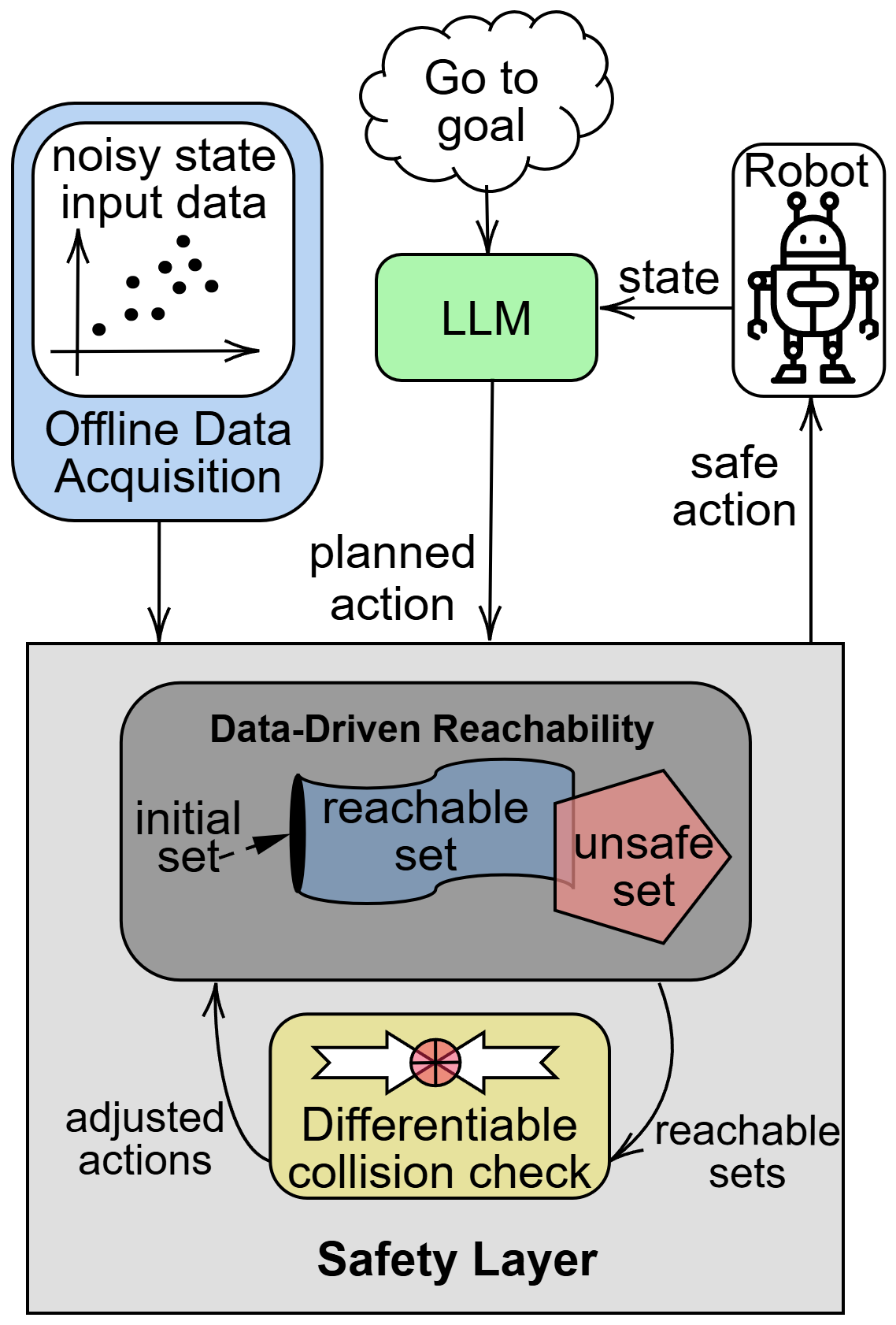}
 \caption{The proposed framework.}
 \label{fig:framework}
 \vspace{-6mm}
\end{figure}

\subsection{Problem Statement}

This research investigates the challenge of controlling a robotic system using text-based input commands. The system is modeled as a discrete-time, nonlinear control system with unknown dynamics, as detailed in~\eqref{eq:sys}. The primary goal is to achieve the safe operation of a black-box robot controlled by an LLM driven by textual inputs.

\begin{algorithm}[t]
\caption{Safe LLM-Based Robot Control.}
\label{alg:llmrobot}
\textbf{Input}: A textual prompt , maximum number of planning steps \( n_{\text{plan}} \), safe plan \( p_0 \), reaching distance radius \( r_{\text{goal}} \), initial robot position deviation \( \epsilon_{x_0} \), minimum distance for recognizing the obstacle \(d\)
\\\textbf{Output}: Safe deployment of a plan action
\begin{algorithmic}[1]
    \While{reaching distance \(\leq r_{\text{goal}}\)}
        \State \( \mathbf{p}_{k} = \) prompt the LLM for a plan
       
            \If{Distance to Obstacle  <= d}
            \State \( \hat{\mathcal{R}}_{k} = \langle\mathbf{x}_{k}, \epsilon_{x_0}\rangle \) \Comment{Initialize reachable set}
            \State \( (\hat{\mathcal{R}}_{j})_{j=k}^{k+n_{\text{plan}}} = \text{reach}(\hat{\mathcal{R}}_{k}, \mathbf{p}_{k}) \) \Comment{Use Alg.~\ref{alg:LipReachability}}
            \If{any \( \hat{\mathcal{R}}_{j} \cap \mathcal{X}_{\text{obs}} \neq \emptyset \)}
                \State \textbf{try:} \( \mathbf{p}_{k} = \text{adjust}(\mathbf{p}_{k}, \mathcal{X}_{\text{obs}}) \) \Comment{Use Alg.~\ref{alg:adjust}}
                \State \textbf{catch:} Use backup safe plan; \textbf{continue}
            \EndIf
        \EndIf

        \State \( \mathbf{u}_{k} = \) Get the first (safe) plan from \( \mathbf{p}_{k} \)
        \State Send the safe plan to the robot
    \EndWhile
\end{algorithmic}
\end{algorithm}

\section{Safe LLM-Controlled Robots} \label{sec:main}
Inspired by \cite{selim2022safe,amos2017optnet,chung2021constrained}, we propose providing safety guarantees for LLM-controlled robots through reachability analysis. As illustrated in Figure~\ref{fig:framework}, the proposed approach leverages offline-collected data (blue, top left) to enhance robot trajectory planning. The process follows a data-driven receding-horizon approach, forming an anti-clockwise loop per planning cycle: starting with the LLM, passing through the safety layer, and reaching the robot. Initially, the LLM takes a text prompt along with the robot’s current state (white, upper right) to generate a planned action. Then, the safety layer (grey, bottom) refines this action by ensuring safety through data-driven reachability analysis and a differentiable collision-checking method that evaluates the robot’s reachable sets. If a collision-free action cannot be determined, a failsafe maneuver is executed. Finally, the adjusted safe action is sent to the robot for execution. 

Algorithm~\ref{alg:llmrobot} ensures safe LLM-driven robot control by integrating reachability analysis. It starts by initializing key parameters—a prompt, a safe initial plan, the maximum planning horizon, goal distance, and initial state deviation. In each iteration, the LLM generates a plan. Then, the reachable sets for future steps based on the plan are computed. If any reachable set intersects with an obstacle, the algorithm attempts to adjust the plan; if adjustment fails, a failsafe maneuver is executed. The first safe action from the verified plan is then applied, and this loop continues until the robot safely reaches its goal. The first component of our proposed approach is the data-driven reachability analysis, which is explained next.

\subsection{Data-Driven Reachability Analysis}\label{subsec:data_driven_reachability_analysis}

Data-driven reachability analysis computes a reachable set that encompasses all possible robot locations derived from past trajectories, eliminating reliance on potentially inaccurate models that could compromise intended safety objectives. The intersection between the reachable set, computed for a plan ${p}_k = ({u}_j)_{j=k}^{n_{plan}}$, and the obstacles is analyzed to assess compliance with the intended safety requirements. The reachability analysis uses Algorithm~\ref{alg:LipReachability}, adapted from \cite{alanwar2021data}. This algorithm overapproximates the reachable set, as defined in \eqref{eq:reachable_set_k}, by calculating a zonotope ${\hat{\mathcal{R}}}_j \supseteq \mathcal{R}_j$ for each time step in the current plan.

\begin{algorithm}[t]
\caption{Data-Driven Reachability Analysis~\cite{alanwar2021data}.}\label{alg:LipReachability}
\textbf{Input}: initial reachable set \( \hat{\mathcal{R}}_{0} \), inputs \( ({u}_{j}^{*})_{j=k}^{k+n_{\text{plan}}} \), state/input data \( ({X}_{-}, {X}_{+}, {U}_{-}) \), noise zonotope \( \mathcal{W}{=}\langle{c}_{{w}}, {G}_{{w}}\rangle \), Lipschitz constant \( L^{*} \), covering radius \( \delta \), small number $\epsilon$
\\\textbf{Output}: overapproximated reachable sets $(\hat{\mathcal{R}}_{j})_{j=k}^{k+n_{\text{plan}}}$
\begin{algorithmic}[1]
\State \( Z_{\epsilon} = \langle{0}, \operatorname{diag}(({L}^{*})_{1}(\delta)_{1}/2, \dots, ({L}^{*})_{n}(\delta)_{n}/2)\rangle \) \label{ln:alglipZeps}
\For{\( j = k : (k + n_{\text{plan}}) \)}
    \State \( {M}_{j} = ({X}_{+} - {c}_{{w}}) \begin{bmatrix} {1} \\ {X}_{-} - 1 \otimes {x}_{j}^{*} \\ {U}_{-} - 1 \otimes {u}_{j}^{*} \end{bmatrix} \)\label{ln:alglipMtilde}
    \State \( \underline{ {I}} = \min_{j} \left( ({X}_{+})_{:,j} - {M}_{j} \begin{bmatrix} 1 \\ ({X}_{-})_{:,j} - {x}_{j}^{*} \\ ({U}_{-})_{:,j} - {u}_{j}^{*} \end{bmatrix} \right) \)
    \State \( \bar{{I}} = \max_{j} \left( ({X}_{+})_{:,j} - {M}_{j} \begin{bmatrix} 1 \\ ({X}_{-})_{:,j} - {x}_{j}^{*} \\ ({U}_{-})_{:,j} - {u}_{j}^{*} \end{bmatrix} \right) \)
    \State \( \mathcal{Z}_{L} = \langle\underline{ {I}}, \bar{{I}}\rangle - \mathcal{W} \) 
    \State \( \mathcal{U}_{j} = \langle{u}_{j}^{*}, \epsilon\rangle \)
    \State \( \hat{\mathcal{R}}_{j+1} = {M}_{j} (1 \times (\hat{\mathcal{R}}_{j} - {x}_{j}^{*}) \times (\mathcal{U}_{j} - {u}_{j}^{*})) + \mathcal{W} + \mathcal{Z_{L}} + \mathcal{Z_{\epsilon}} \)
\EndFor

\Return \( (\hat{\mathcal{R}}_{j})_{j=k}^{k+n_{\text{plan}}} \) \Comment{overapproximates~\eqref{eq:reachable_set_k}}
\end{algorithmic}
\end{algorithm}

Our approach leverages noisy trajectory data collected offline from the black-box system model, while online data is reserved exclusively for training the policy and environment model. We utilize ${q}$ input-state trajectories. For efficient matrix operations in Algorithm~\ref{alg:LipReachability}, we structure the data into the following matrices, which are written for a single trajectory with length $T$ to ease the notations. 
\begin{subequations}\label{eq:data}
\begin{align}
 {X}_- &= \left[{{x}}_0, \dots, {{x}}_{T-1} \right], \\
 {X}_+ &= \left[{{x}}_1, \dots, {{x}}_{T} \right], \\
 {U}_- &= \left[{u}_0, \dots, {u}_{T-1} \right].
\end{align}
\end{subequations}

We estimate the Lipschitz constant of the dynamics from the dataset $({X}_-,{X}_+,{U}_-)$ following the approach in \cite[Remark 1]{alanwar2021data}. Additionally, we define a data covering radius $\delta$ such that, for any point ${z}_1 \in \mathcal{X} \times \mathcal{U}$, there exists a point ${z}_2 \in \mathcal{X} \times \mathcal{U}$ with $\norm{{z}_1 - {z}_2}_2 \leq \delta$. We assume sufficient offline data is available {a priori} to upper-bound $L^\star$ and lower-bound $\delta$, with these bounds holding consistently across offline collection and online execution, consistent with prior work \cite{koller2018learning, alanwar2021data}. To mitigate overconservatism in the reachable set, we compute distinct $(L^\star)_i$ and $(\delta)_i$ for each dimension, enabling a refined Lipschitz zonotope $\mathcal{Z}_\epsilon$ (see Line~\ref{ln:alglipZeps} of Algorithm~\ref{alg:LipReachability}).

\subsection{Adjusting Unsafe Actions}

After the LLM generates a plan ${p}_k$, the safety layer refines it by verifying the intersection of the plan’s reachable sets with unsafe regions. This adjustment process depends only on the unsafe sets surrounding the robot. If all actions in the plan are deemed safe, it is executed in the environment; otherwise, we seek a safe alternative. Rather than relying on inefficient random sampling in expansive action spaces, we employ gradient descent to modify the plan, ensuring reachable sets avoid collisions and incorporate a failsafe maneuver.

Unsafe actions are adjusted via Algorithm~\ref{alg:adjust}. If the algorithm fails to converge within one-time step, it halts, and the robot reverts to the prior safe plan. The process iterates over each action in ${p}$, performing these steps: compute the reachable set for all subsequent steps using Algorithm~\ref{alg:LipReachability}, check for collisions, and, if detected, calculate the collision-check gradient and apply projected gradient descent. A safe plan is returned upon convergence; otherwise, it is flagged as unsafe. The final plan must embed a failsafe maneuver.

Collision checking between reachable and unsafe sets, both modeled as constrained zonotopes, proceeds as follows. For two zonotopes $\mathcal{Z}_1 = \langle{{c}_1, {G}_1, {A}_1, {b}_1}\rangle$ and $\mathcal{Z}_2 = \langle{{c}_2, {G}_2, {A}_2, {b}_2}\rangle$, their intersection is $\mathcal{Z}_\cap = \mathcal{Z}_1 \cap \mathcal{Z}_2 = \langle{{c}_{\cap}, {G}_{\cap}, {A}_{\cap}, {b}_{\cap}}\rangle$ \cite{scott2016constrained}, defined by:
\begin{align}
\mathcal{Z}_\cap = \Bigg\langle {c}_1, [{G}_1, {0}], \begin{bmatrix}
 {A}_1 & {0} \\
 {0} & {A}_2 \\
 {G}_1 & -{G}_2
 \end{bmatrix}, \begin{bmatrix}
 {b}_1 \\ {b}_2 \\ {c}_2 - {c}_1
 \end{bmatrix} \Bigg\rangle.
\end{align}
We determine if $\mathcal{Z}_1 \cap \mathcal{Z}_2$ is empty by solving the linear program \cite{scott2016constrained}:
\begin{align}\label{prog:collision_check}
 v\opt = \min_{{z}, v} \left\{ v \mid {A}_{\cap} {z} = {b}_{\cap}, \ |{z}| \leq v \right\},
\end{align}
where $|{z}|$ is elementwise; $\mathcal{Z}_\cap$ is nonempty if and only if $v\opt \leq 1$.

\begin{algorithm}[t]
\caption{Adjusting Unsafe Actions~\cite{selim2022safe}.}
\label{alg:adjust}

\textbf{Input}: plan \( p_k = (u_j)_{j=k}^{k+n_{\text{plan}}} \), obstacles \( \mathcal{X}_{\text{obs}} \), initial reachable set \( \hat{\mathcal{R}}_k \), step size \( \gamma \), time steps required to stop \( n_{\text{break}} \)
\\\textbf{Output}: safe plan
\begin{algorithmic}[1]
\State \( p_{\text{safe}} = p_k \) \textbf{initialize with given plan}
\For{\( j = k : (k + n_{\text{break}}) \)}
        \State \( {(\hat{\mathcal{R}}_i)}_{i=k}^{k+n_{\text{plan}}} = \text{reach} \left(\hat{\mathcal{R}}_j, p_{\text{safe}} \right) \) \Comment{use Alg.~\ref{alg:LipReachability}}
        \If{ \( {(\hat{\mathcal{R}}_i)}_{i=k}^{k+n_{\text{plan}}} \cap \mathcal{X}_{\text{obs}} \neq \emptyset \) } \Comment{using~\eqref{prog:collision_check} }

            \State \( u_j = \text{proj}_{I_j} \left( u_j - \gamma \nabla_{u_j} v^* \right) \) \Comment{using~\eqref{eq:collision_chain_rule}}
        \EndIf
\EndFor
\If{ \( {(\hat{\mathcal{R}}_i)}_{i=k}^{k+n_{\text{plan}}} \cap \mathcal{X}_{\text{obs}} = \emptyset \) }
    \State  \textbf{try:} \( p_{\text{safe}} = (u_j)_{j=k}^{k+n_{\text{plan}}} \) \Comment{apply safe plan}
    \State\textbf{catch:}  Override with the backup safe plan \Comment{failed to project the plan}
\EndIf
\end{algorithmic}
\end{algorithm}


To avert collisions, we use gradient descent to adjust the reachable sets ${\hat{R}}_k$. Let $\hat{{c}}_k$ be the center of ${\hat{R}}_k$, which, per Algorithm~\ref{alg:LipReachability}, depends on ${u}_0, \dots, {u}_{k-1}$. Given an optimal solution $({z}\opt, v\opt)$ to \eqref{prog:collision_check} for ${\hat{R}}_k$ and an unsafe set, collision avoidance requires $v\opt > 1$ \cite{scott2016constrained}. We compute the gradient $\nabla_{\RLaction_k} v\opt$ with respect to the input action via a chain rule recursion:
\begin{align}\label{eq:collision_chain_rule}
 \nabla_{\RLaction_{h}} v\opt = \nabla_{\hat{{c}}_k} v\opt
 \nabla_{\hat{{c}}_{k - 1}} \hat{{c}}_k
 \left( \prod_{j=h + 2}^{k - 1} \nabla_{\hat{{c}}_{j - 1}} \hat{{c}}_j \right)
 \nabla_{\RLaction_{h}} \hat{{c}}_{h + 1},
\end{align}
where $h = k - i$. The gradients of $\hat{{c}}_k$ are:
\begin{subequations}
\begin{align}
 \nabla_{\hat{{c}}_{k - 1}} \hat{{c}}_k &= ({M}_{k - 1})_{(1:1+n),(1:1+n)}, \label{eq:collision_chain_rule_for_input_zono1} \\
 \nabla_{u_{k - 1}} \hat{{c}}_k &= ({M}_{k-1})_{:,(n+1:n+1+m)}, \label{eq:collision_chain_rule_for_input_zono2}
\end{align}
\end{subequations}
with ${M}_{k-1}$ from Algorithm~\ref{alg:LipReachability}, Line~\ref{ln:alglipMtilde}, and $n$ and $m$ as state and action dimensions. After gradient descent using $\nabla_{{u}_k} v\opt$, we project ${u}_k$ onto the feasible control set: $\proj_{{U}_k}({u}_k) = \argmin_{{v} \in {U}_k} \norm{{u}_k - {v}}_2^2$. Since the resulting controls may remain unsafe, we recheck the final reachable sets in Algorithm~\ref{alg:adjust}.

\subsection{Safety Guarantees}

We conclude by formalizing the safety guarantees for LLM-controlled robots.

\begin{theorem}\label{thm:theorem}
Assume the robot and the environment satisfy the conditions in Section~\ref{sec:prelim}, and the robot starts in a safe state at $k = 0$. Given an input text command to the LLM to let the robot go to a target, then Algorithm~\ref{alg:llmrobot} guarantees that the robot remains safe; if at each $k > 0$, the LLM generates a plan ${p}_k$ and it is adjusted using Algorithm~\ref{alg:adjust}.
\end{theorem}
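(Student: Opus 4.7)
The plan is to establish safety by induction on the time step $k$, using the overapproximation guarantee of Algorithm~\ref{alg:LipReachability} (i.e.\ $\hat{\mathcal{R}}_j \supseteq \mathcal{R}_j$, proved in \cite{alanwar2021data}) together with the failsafe-maneuver assumption from Section~\ref{sec:prelim}. The inductive invariant I would carry is stronger than ``the robot is safe at step $k$'': namely, at the beginning of every iteration of the while-loop in Algorithm~\ref{alg:llmrobot}, the robot has at hand a \emph{certified safe plan} whose reachable sets over the full horizon $\{k,\dots,k+n\lbl{plan}\}$ do not intersect $\mathcal{X}\obs$ and whose tail contains a braking sub-plan that brings the robot to a stationary safe state within $n\brk$ steps. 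The base case holds by the hypothesis that ${x}_0$ is safe and that a safe plan ${p}_0$ is supplied as input to Algorithm~\ref{alg:llmrobot}.

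For the inductive step I would split into cases according to the outcome of the safety layer at step $k$. In the first case, the LLM-generated plan ${p}_k$ is either immediately collision-free (its overapproximated reachable sets computed via Algorithm~\ref{alg:LipReachability} miss all obstacles, verified by the linear program \eqref{prog:collision_check}) or is successfully projected by Algorithm~\ref{alg:adjust} to a plan whose reachable sets are collision-free; in both sub-cases, execution of the first action keeps $x_{k+1}\in\mathcal{R}_{k+1}\subseteq\hat{\mathcal{R}}_{k+1}\subseteq \mathcal{X}\setminus\mathcal{X}\obs$ by the overapproximation property, and the remainder of the adjusted plan (including its failsafe tail) serves as the certified plan at step $k+1$. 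In the second case, Algorithm~\ref{alg:adjust} fails to converge to a safe plan within one time step, and the ``catch'' branch reverts to the backup safe plan carried over from step $k-1$; by the inductive hypothesis this backup plan is precisely one that executes braking and keeps the robot inside a verified obstacle-free tube, so safety at $k+1$ still follows from $\mathcal{R}_{k+1}\subseteq\hat{\mathcal{R}}_{k+1}$.

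Once both cases are handled, the theorem follows by noting that the loop terminates when the goal is reached and that every action actually sent to the robot (line ``Send the safe plan to the robot'' in Algorithm~\ref{alg:llmrobot}) is the first element of a certified plan, so $x_k\notin\mathcal{X}\obs$ for all $k$. I would also remark that the translation-invariance of $f$ assumed in Section~\ref{sec:prelim} is what guarantees that the stored braking sub-plan, computed relative to an earlier robot position, remains valid after the robot has advanced one step.

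The main obstacle I anticipate is the bookkeeping around the failsafe tail: specifically, proving that whenever Algorithm~\ref{alg:adjust} returns a plan (either adjusted or backup), that plan still contains a complete $n\brk$-step braking maneuver whose reachable sets have been verified collision-free. This is not an identity but a property that has to be threaded carefully through the recursion: the adjusted plan must be projected in a way that preserves the terminal stationary segment, and the backup plan inherited from the previous step must be ``shifted'' by one step using the translation-invariance of the dynamics. Making this tail-preservation argument rigorous---and in particular showing that the gradient step and projection $\proj_{{U}_k}$ in Algorithm~\ref{alg:adjust} do not destroy the braking portion---is the step where I expect most of the technical effort to lie; the remainder reduces to chaining the overapproximation inclusion $\mathcal{R}_j\subseteq\hat{\mathcal{R}}_j$ with the emptiness certificate $v\opt>1$ from \eqref{prog:collision_check}.
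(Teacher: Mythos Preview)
Your proposal is correct and follows essentially the same route as the paper's proof: induction on $k$ with the invariant that a certified safe plan (ending in a failsafe braking maneuver) is always available, combined with the overapproximation guarantee of Algorithm~\ref{alg:LipReachability} from \cite{alanwar2021data} and the soundness of the emptiness test \eqref{prog:collision_check} from \cite{scott2016constrained}. The paper's own proof is considerably terser---it simply asserts the three ingredients (overapproximation, reliable collision check, embedded failsafe) without working through the tail-preservation bookkeeping or the role of translation invariance that you flag as the main technical effort---so your version is, if anything, a more explicit rendering of the same argument.
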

\begin{proof}
We prove this by induction. At $k = 0$, the robot can apply $u_{brake}$ to stay safe indefinitely. Assume a safe plan exists at time $k \in \N$. If Algorithm~\ref{alg:adjust} outputs an unsafe plan, the robot defaults to the prior safe plan; otherwise, the new plan is safe due to three properties: (1) Algorithm~\ref{alg:LipReachability} ensures the reachable set overapproximates the true set, as process noise is bounded by a zonotope \cite[Theorem 2]{alanwar2021data}; (2) Algorithm~\ref{alg:adjust}’s collision check reliably detects overlaps \cite{scott2016constrained}; and (3) the adjusted plan enforces a stop after $n_{plan}$ steps, embedding a failsafe maneuver.
\end{proof}

\section{Case Studies}\label{sec:eval}
\begin{figure*}[h]
 \centering
 \begin{subfigure}[cb]{0.32\textwidth}
 \centering
 \includegraphics[height=0.23\textheight]{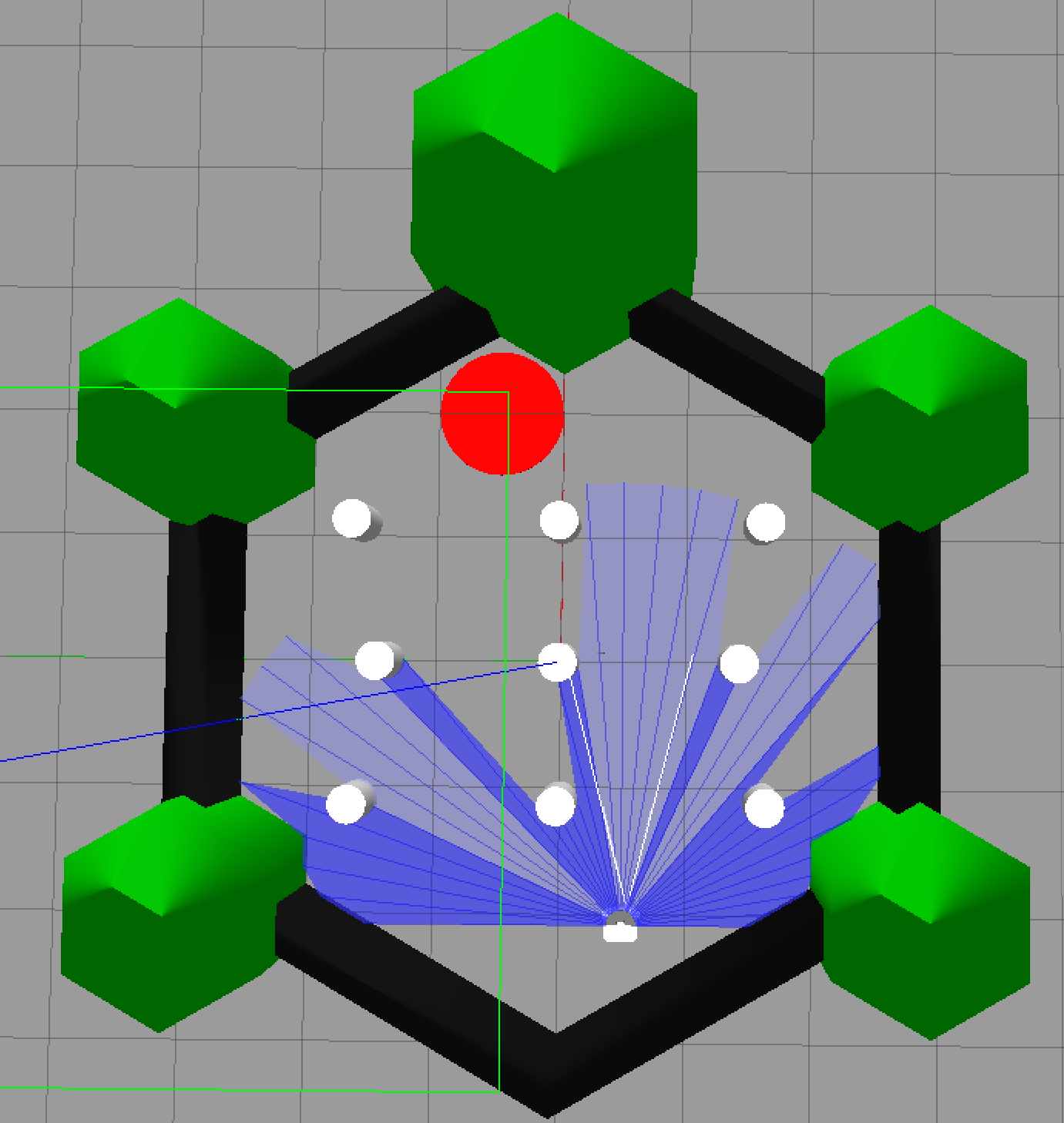}
 \caption{TurtleBot3 world.}
 \label{fig:world}
 \end{subfigure}
 \hfill
 \begin{subfigure}[bc]{0.33\textwidth}
 \centering
 \includegraphics[height=0.23\textheight]{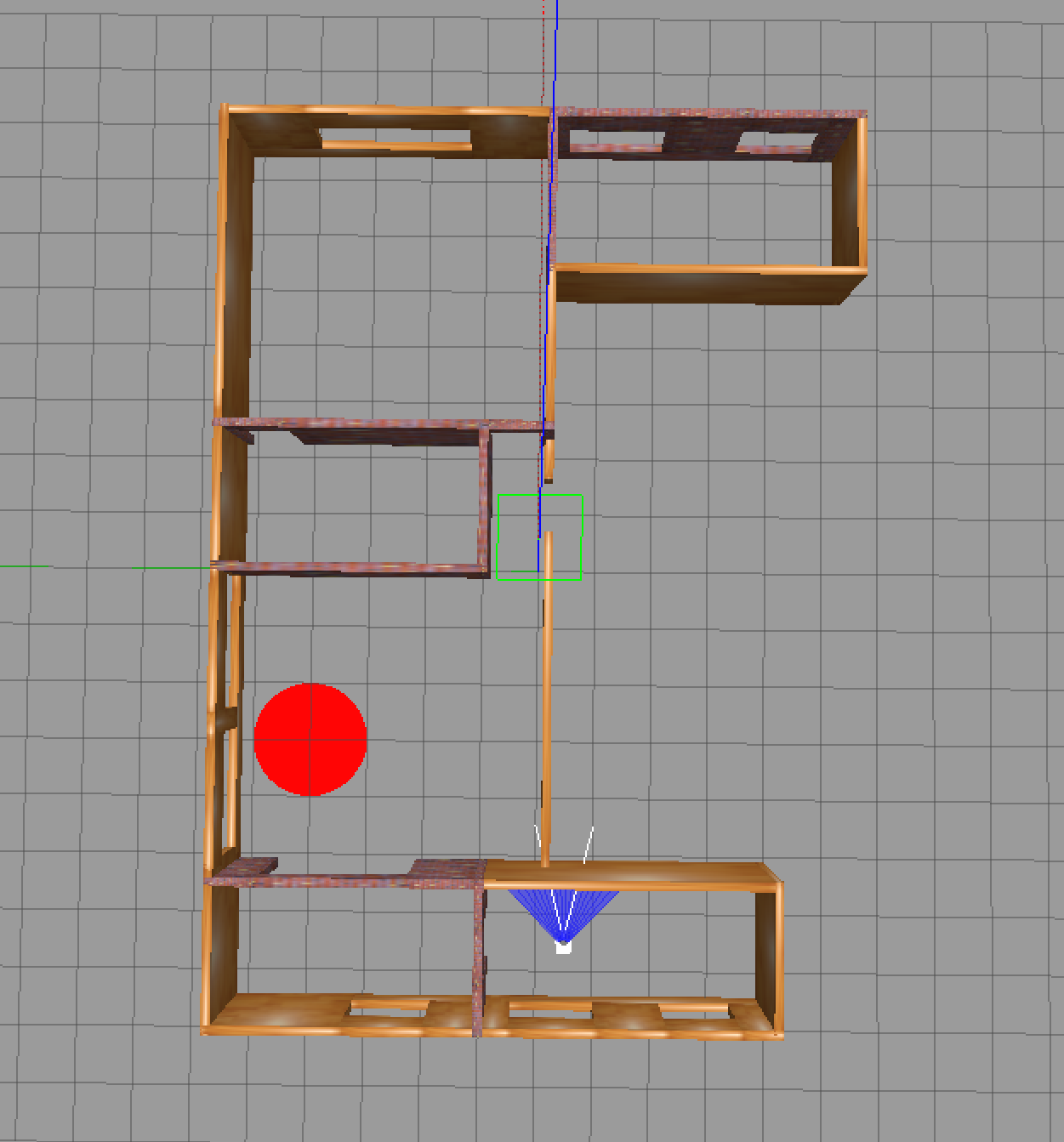}
 \caption{TurtleBot3 house.}
 \label{fig:house}
 \end{subfigure}
\hfill
 \begin{subfigure}[bc]{0.33\textwidth}
 \centering
 \includegraphics[height=0.23\textheight]{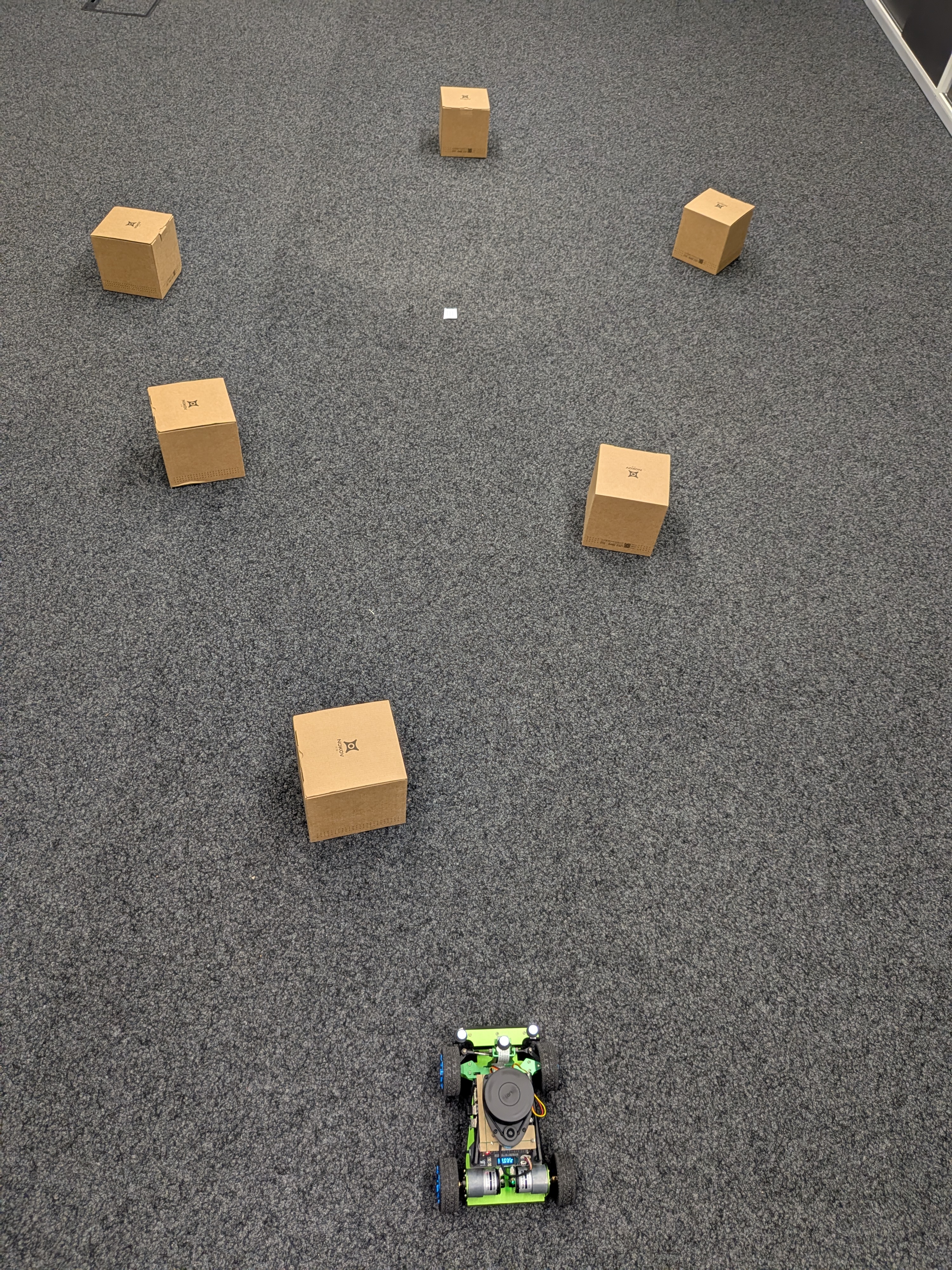}
 \caption{JetRacer in the CPS lab.}
 \label{fig:JRP}
 \end{subfigure}
 \caption{Evaluation environments where in a and b the TurtleBot3 robot is the white rectangle and the red circle is the target, for c the target is the white spot.}
 \label{fig:turtlebot3_environments}
 \vspace{-6mm}
\end{figure*}


We evaluate the proposed method for LLM-driven robotic control using two case studies: a 2D differential robot (TurtleBot3) and a small autonomous vehicle (JetRacer). To conduct the data-driven reachability analysis, we first collect 600-step noisy input/output data offline in an empty environment.

The TurtleBot3 is simulated in two distinct environments using Gazebo and ROS, as illustrated in Figure~\ref{fig:turtlebot3_environments}: In the world environment (Figure~\ref{fig:world}), the robot navigates through cubic obstacles, testing its ability to perform obstacle avoidance. In the house environment (Figure~\ref{fig:house}), the system evaluates indoor navigation and object detection within a furnished space. A snippet of the prompt used for LLM control is shown in Figure~\ref{fig:prompt}. 

The JetRacer operates in the Cyber-Physical Systems (CPS) lab, where it is commanded via LLM to reach a designated target position, as depicted in Figure~\ref{fig:JRP}.

\begin{figure*}[t]
 \centering
 \includegraphics[width=0.55\textwidth]{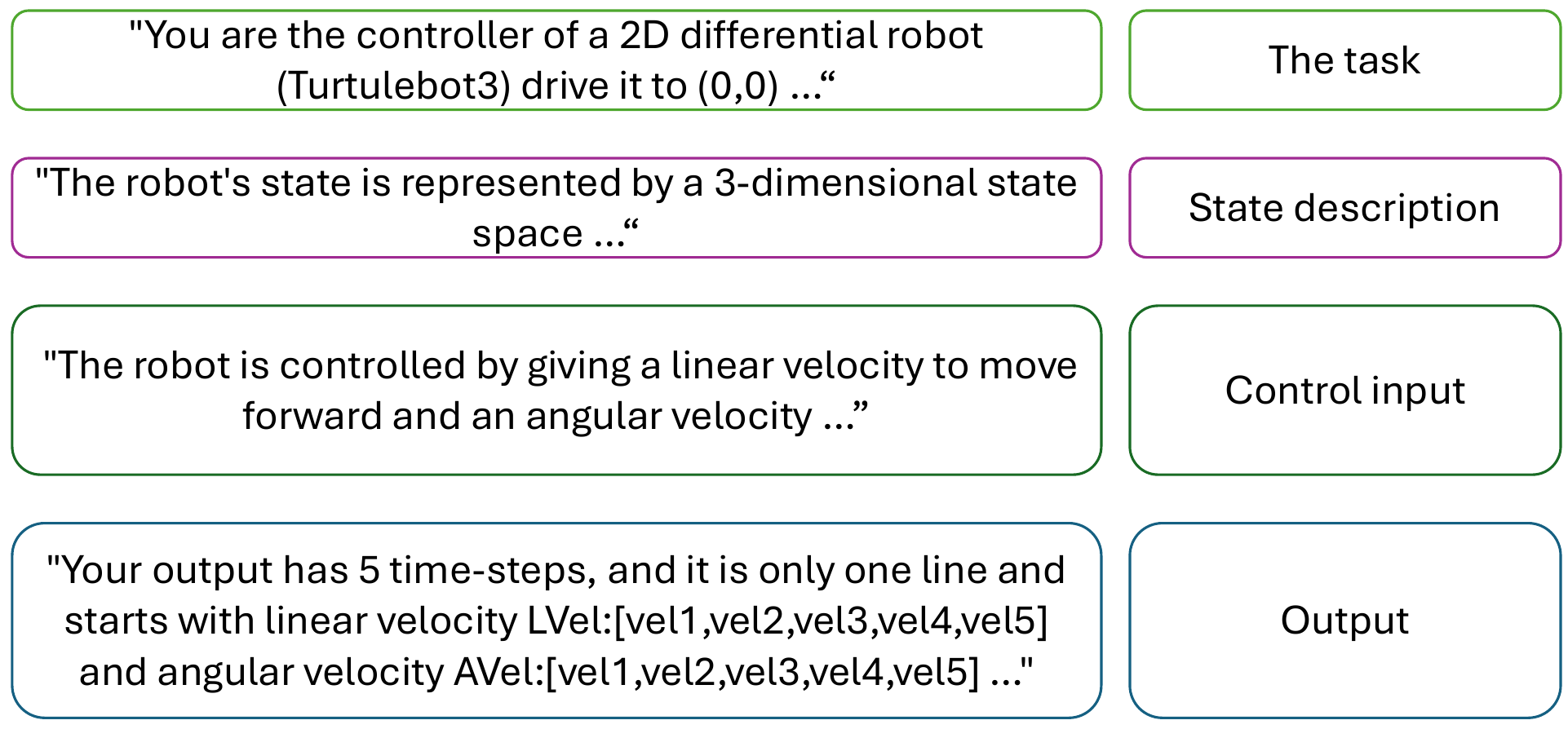}
 \caption{Snippet of the prompt.}
 \label{fig:prompt}
 \vspace{-6mm}
 \end{figure*}
\mycomment{\begin{lstlisting}[breaklines=true]
prompt = (
 "You are the controller of a 2D differentioal robot (Turtulebot3)."
 "Please infer the control inputs(planed actions)."
 "The robot's state is represented by a 3-dimensional state space."
 "The first 2 dimensions correspond to the robot's x(m) and y(m) positions, respectively."
 "The last dimension denote the robot's orientation around z-axis in rad."
 "The robot is cotrolled by giving a linear velocity to move forward and an anglar velocity to..."
 "rotate around z-axis."
 "The current positions , orientation and reaching reaching radius are given to you at each time step as x,y, theta and R."
 "Your task is to generate control inputs(linear and angular velocites) to guide the robot to the target position."
 "In each time step you creat a squence of control iputs(planned actions) with 10 steps horizen which only the fist one is applied." 
 "Your output has 5 elements and it is only one line and starts with LVel:[vel1,vel2,vel3,vel4,vel5] and AVel:[vel1,vel2,vel3,vel4,vel5] with the first and secound arrayes ..."
 "are linear(in m/s) and angular(in rad/s) velocites, respectively, please do not output other redundant words."
 "The liear velocity control input is limited n [0,0.5] and the angular velocity is limited in [-0.5,0.5]."
 "A Lidar with 19 beams looks forward from -45 to 45 degree.If the lidar value is 3.5, it means there is no obstacle in th beam direction. "
 f"the lidar range is given to you in real time rangees:{self.Lidar_readings}"
 " Generate plan actions to move the robot to target position x=-3,y=4 by minimizing R an avoiding abstacles. "
 f"Current position: x={current_x}, y={current_y}, theta ={current_theta}, R= {reaching_radius} and LOS angle of the rurrent location with target is {LOS}. Try to keep it zero."
 "Just write output and do not wirte anything else."
)
\end{lstlisting}
}

\subsection{Black-Box Dynamical Model}
We use \( {x} = [p_x, p_y, \cos(\psi), \sin(\psi)] \) and \( u = [v, \omega] \) as state and input vectors respectively. \( v \) is the linear velocity and \( \omega \) is the angular velocity. \(p_x,p_y\) and \(\psi\) are the robot position and orientation, respectively.

\subsection{Implementation Details and Results}
We have used the OpenAI "GPT-4o" model with a temperature parameter set to ($0.1$). Increasing this parameter to near $1$ allows the LLM model to explore more possibilities. We have considered 3 steps planning horizon (5 steps for JetRacer). The plan is generated by ChatGPT. This model is limited to 500 Requests Per Minute (RPM), which is around 8.3 Hz, and 30,000 Tokens Per Minute (TPM), which is equal to 500 tokens per second. Assuming four english character as a token and each number as around 3.5 tokens, our prompt is about 230 tokens. Given the information about the OpenAI model used in our simulation, we could reach a maximum of 1.5 Hz update rate.

In Table~\ref{tab:reachability_analysis}, the performance of the data-driven reachability filter is evaluated using the TurtleBot3. The execution time and frequency of the reachability analysis were computed for a single time step, considering different obstacle configurations and planning horizons. As the number of obstacles and the planning horizon increase, the computational cost and execution time also increase.

\begin{table}[t]
\centering
\caption{Performance of data-driven reachability analysis across varying conditions. There is no collision in all cases.}
\label{tab:reachability_analysis}
\begin{tabular}{@{}cccc@{}}
\toprule
\begin{tabular}[c]{@{}c@{}}Number of\\ obstacles\end{tabular} & \begin{tabular}[c]{@{}c@{}}Number of\\ plans\end{tabular} & \begin{tabular}[c]{@{}c@{}}Execution\\ time (s)\end{tabular} & \begin{tabular}[c]{@{}c@{}}Frequency\\ (Hz)\end{tabular} \\ \midrule
 & 3 & 0.04 & 25 \\
3 & 5 & 0.1 & 10 \\
 & 10 & 0.16 & 6 \\ \cmidrule(r){1-1}
 & 3 & 0.08 & 12.5 \\
5 & 5 & 0.16 & 6 \\
 & 10 & 0.22 & 4.5 \\ \bottomrule
\end{tabular}
\vspace{-0.3cm}
\end{table}

The robot's input linear velocity is limited to \( [0.0, 0.1] \, \text{m/s} \) in the first step and \( [0.0, 0.5] \, \text{m/s} \) for the rest, while the angular velocity is limited to \( [-0.5, 0.5] \, \text{rad/s} \) in the reachability analysis. The TurtleBot3 is equipped with wheel encoders and a planar LiDAR that generates 19 range measurements between \( -45^\circ \) and \( 45^\circ \) for house environments and 37 range measurements between \( -90^\circ \) and \( 90^\circ \) for world environments.

In the prompt, the current location and orientation of the robot, along with LiDAR information and the distance to the goal location, are provided. However, we set the linear velocity bound to \( [0.1, 0.5] \, \text{m/s} \) in the prompt to avoid zero output by ChatGPT, based on our experiments. All TurtleBot3's codes run on a computer with an Intel i5 1235U CPU and 8 GB memory (RAM). 
\begin{table}[t]
\centering
\caption{Comparison of using safety filters and planners.}
\label{tab:comparison}
\begin{tabular}{@{}cccc@{}}
\toprule & Our approach & LLM-AISF & RL-SAILR~\cite{wagener2021safe} \\ 
\midrule
\begin{tabular}[c]{@{}c@{}}Formal safety\\ guarantee\end{tabular} & Yes & No & No \\
\begin{tabular}[c]{@{}c@{}}Minimum distance\\ to obstacle (m)\end{tabular} & 0.1 & 0.5 & - \\
\begin{tabular}[c]{@{}c@{}}Plan horizon\\ (steps)\end{tabular} & 3 & 1 & 1 \\
Pre-training & No & No & Yes \\
collision & No & Yes & Yes \\ \bottomrule
\end{tabular}
\vspace{-0.5cm}
\end{table}


In Table~\ref{tab:comparison}, we compare the data-driven reachability-based safety filter used in this work with the Advantage-based Intervention Safety Filter (AISF), where an LLM serves as the controller within the Safe Advantage-based Intervention for Learning Policies with Reinforcement (SAILR) framework~\cite{wagener2021safe}.

When comparing the safety filters alone, our approach provides a formal safety guarantee. AISF, on the other hand, checks safety only one step ahead, requiring the agent to maintain a sufficient distance from obstacles to prevent collisions. In contrast, our safety filter evaluates safety over a horizon of multiple time steps, allowing the agent to operate closer to obstacles while ensuring safety. Additionally, we employed the current LLM model without any pre-training, whereas reinforcement learning methods require an unsupervised learning process to train the agent. Our approach also guarantees collision avoidance, which LLM-AISF fails to do in some cases. However, our approach comes at a higher computational cost than LLM-AISF, presenting a trade-off between safety and efficiency.

Based on our experiments, ChatGPT is sensitive to prompt wording. Different prompts might lead to different performances and even failure to plan successfully.
Since utilizing a stateless API and zero-shot learning allows each request to be fully defined within its prompt, we can efficiently process reachability analysis results independently, enabling consistent and high-quality performance without the need for prior feedback.

\subsection{Application to a Small Vehicle}
Our framework is also applied to a JetRacer, as shown in Figure~\ref{fig:JRP} and Figure~\ref{fig:JRW}, to further validate its effectiveness. This experiment took place within a NOKOV motion capture-enabled environment consisting of 10 9-megapixel NOKOV cameras, providing precise real-time tracking of the JetRacer's position and orientation. Using LLM-generated text prompt, the JetRacer successfully demonstrated obstacle avoidance while reaching its designated goal. The experiment involved multiple obstacles, where the LLM provided a 5-step planning horizon to navigate the JetRacer safely. Leveraging the same data-driven reachability analysis, the system ensured that all reachable sets remained collision-free, achieving robust safety guarantees while maintaining adaptability to the JetRacer's dynamics. This application underscores the versatility of our approach across different robotic platforms and operational conditions.

\begin{figure}[t]
 \centering
 \includegraphics[width=0.6\columnwidth]{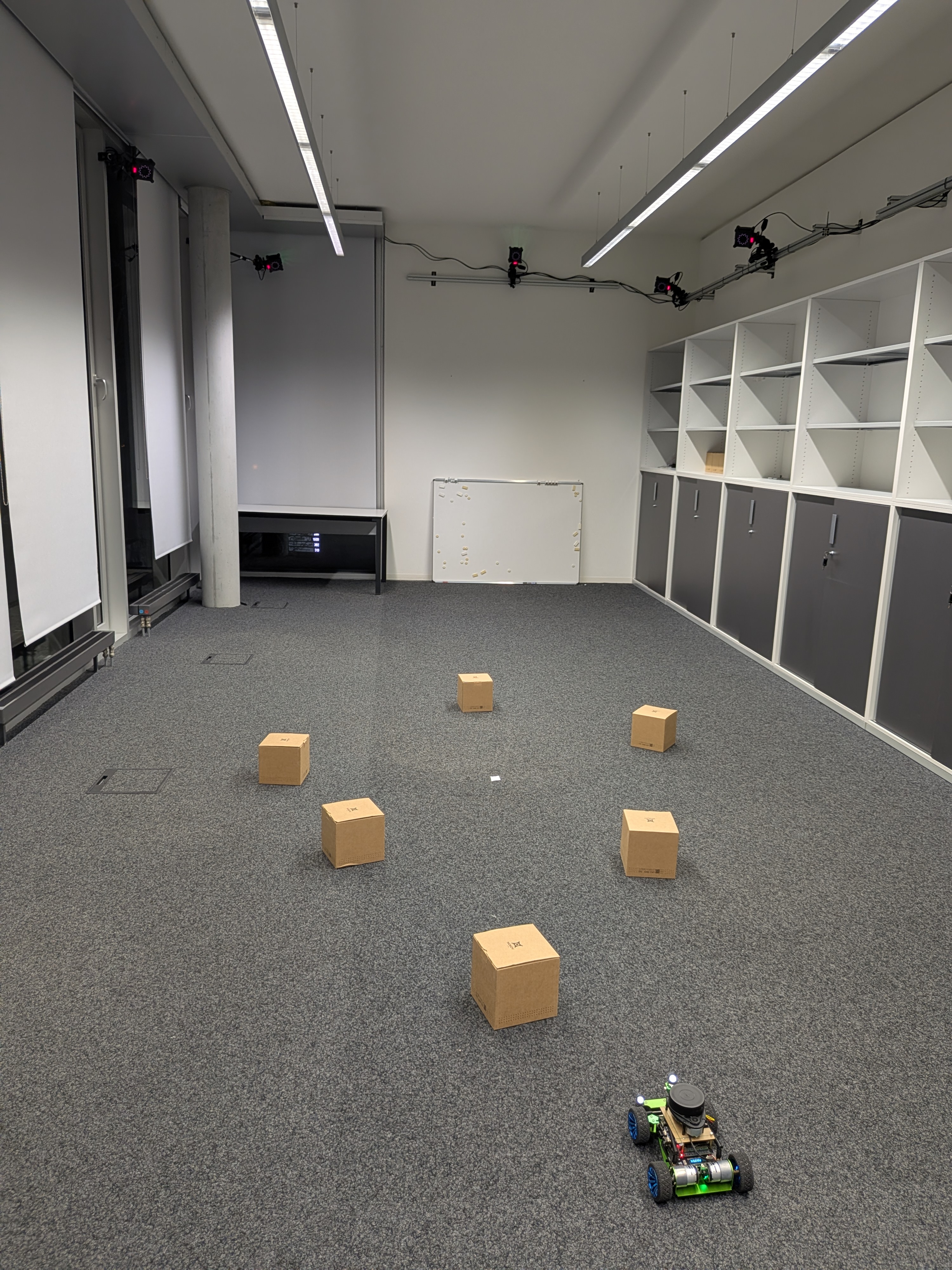}
 \caption{JetRacer with motion capture system, the white point is the goal, and the boxes are the obstacles.}
 \label{fig:JRW}
 \vspace{-6mm}
\end{figure}
\section{Conclusion}\label{sec:con}

This paper presents a novel safety assurance framework for LLM-controlled robotic systems, addressing the critical challenge of ensuring safe operation in dynamic and unpredictable environments. By integrating LLMs with zero-shot learning capabilities and data-driven reachability analysis, we provide a principled approach to verifying and adjusting LLM-generated plans without relying on potentially inaccurate analytical models. Our framework leverages offline trajectory data to compute overapproximated reachable sets, ensuring that all possible system trajectories remain within safe operational limits. The provided case studies highlight its ability to mitigate risks associated with the probabilistic nature of LLMs, achieving formal safety guarantees while maintaining adaptability to unseen tasks. 

Future work includes integrating feedback mechanisms into the LLM control loop, offering a promising avenue for enhancing adaptability. While stateless API communication enables zero-shot learning, it also limits the ability to incorporate direct feedback from reachability analysis results. To address this, developing a stateful interaction model or fine-tuning LLMs with safety-aware training data could significantly improve plan generation quality, reducing the need for extensive manual adjustments.

\bibliographystyle{IEEEtran}
{\small
\bibliography{IEEEabrv,ref}
}

\end{document}